\definecolor{myred}{HTML}{e53935}
\definecolor{myblue}{HTML}{0277bd}
\definecolor{modelblue}{HTML}{1034A6}
\definecolor{urlorange}{HTML}{1034A6}
\newcommand{\subscript}[2]{$#1 _ #2$}
\definecolor{LightRed}{rgb}{.99,.5,.5}
\definecolor{LightGreen}{rgb}{.5,.99,.5}
\newcommand{\checkvalue}[1]{
  \IfDecimal{#1}{
  \ifdim #1pt<0pt
    \FPset\inp{#1}
    \FPeval\oup{min(100,max(0,100*((10+\inp)/10)))}
    \xdef\oup{\oup}
    \cellcolor{white!\oup!LightRed}
    {{\hspace{-5pt}#1\%}}
  \else
    \FPset\inp{#1}
    \FPeval\oup{min(100,max(0,100*((1+\inp)/10)))}
    \xdef\oup{\oup}
    \cellcolor{LightGreen!\oup!white}
    {{\hspace{-5pt}#1\%}}
  \fi
  }
  {#1}
}
\newcommand{\checkfloat}[1]{
  \IfDecimal{#1}{
  \ifdim #1pt<0.65pt
    \FPset\inp{#1}
    \FPeval\oup{min(100,max(0,100*((\inp-.089)/.65)))}
    \xdef\oup{\oup}
    \cellcolor{white!\oup!LightRed}
    {{\hspace{-5pt}{#1}\%}}
  \else
    \FPset\inp{#1}
    \FPeval\oup{min(100,max(0,100*((\inp-0.65)/(1.915-0.65))))}
    \xdef\oup{\oup}
    \cellcolor{LightGreen!\oup!white} 
    {{\hspace{-5pt}{#1}\%}}
  \fi
  }
  {#1}
}
\newcolumntype{C}{>{\collectcell\checkvalue}c<{\endcollectcell}}
\newcolumntype{F}{>{\collectcell\checkfloat}c<{\endcollectcell}}
\newtheorem{property}{Property}[subsection]
\newtheorem{lemma}{Lemma}
\newtheorem{definition}{Definition}
\newtheorem{property-non}{Property}
\newcommand{\emethod}{{\color{modelgreen}CAMP}}
\newcommand{\method}{{\color{modelgreen}CAMP }}
\newcommand{\req}[1]{{\color{modelpink} \subscript{\textbf{\S}}{{#1}}}}
\definecolor{modelpink}{HTML}{654321}  
\definecolor{modelgreen}{HTML}{0d8a67}
\definecolor{green}{HTML}{1a7a0d}
\def\eqref#1{equation~\ref{#1}}
\def\1{\bm{1}}
\DeclareMathAlphabet{\mathsfit}{\encodingdefault}{\sfdefault}{m}{sl}
\SetMathAlphabet{\mathsfit}{bold}{\encodingdefault}{\sfdefault}{bx}{n}
\title{In-Context Learning for Few-Shot\\ Molecular Property Prediction}
\author{%
  Christopher Fifty, Jure Leskovec, Sebastian Thrun\\
  Stanford University\\
  \texttt{fifty@cs.stanford.com} \\
}
\begin{document}

\maketitle

\begin{abstract}
In-context learning has become an important approach for few-shot learning in Large Language Models because of its ability to rapidly adapt to new tasks without fine-tuning model parameters. However, it is restricted to applications in natural language and inapplicable to other domains. 
In this paper, we adapt the concepts underpinning in-context learning to develop a new algorithm for few-shot molecular property prediction. Our approach learns to predict molecular properties from a context of (molecule, property measurement) pairs and rapidly adapts to new properties without fine-tuning. 
On the FS-Mol and BACE molecular property prediction benchmarks, we find this method surpasses the performance of recent meta-learning algorithms at small support sizes and is competitive with the best methods at large support sizes.

\end{abstract}

\section{Introduction}

In-context learning describes an emergent property of large language models (LLMs) that enables them to solve new tasks from only a few demonstrations and without any gradient updates to the model parameters~\citep{brown2020language}. 
This capacity to rapidly adapt to new tasks contrasts sharply with typical few-shot learning algorithms that either use gradient updates, or distance computations to prototypical class centroids, to adapt the pre-trained model to the few-shot learning objective. As a result, in-context learning has become a powerful approach for few-shot learning applications in natural language; however, it is inapplicable to other domains as it uses a language modeling objective to train the model.

One such domain is molecular science where few-shot learning is critical to drug discovery. After a biological target has been identified, finding small molecules that inhibit this target may lead to desirable outcomes. For example, inhibiting the protein 15-PGDH with a small molecule inhibitor leads to rejuvenation of aged skeletal muscle tissue in animal studies, effectively reverse-aging the cells~\citep{palla2021inhibition}. However, identifying small molecules that inhibit a specific biological target or pathway can be difficult as the number of experimental measurements collected for a given target or pathway is often on the order of 10s or 100s of measurements. Moreover, collecting additional experimental measurements is costly and time-intensive, often entailing a post-doc in a laboratory searching the literature for inhibitors to similar targets, ordering reagents, and running biological assays with limited throughput (10s of molecules per assay). Accordingly, there is great interest in applying few-shot learning methods to molecular property prediction, and in particular high-throughput screening where small molecule libraries of order $10^9$ can be processed to identify promising inhibitors. 

In this work, we develop an algorithm that brings the benefits of in-context learning---namely predicting the label of a point from a context of demonstrations in a single forward pass---to molecular property prediction. Our approach, called Context Aware Molecule Prediction (\emethod), learns to predict molecular properties from a context of (molecule, property) demonstrations and can rapidly adapt to new tasks without finetuning. 

\method operates by first encoding query and demonstration molecules with a molecule encoder and property measurements with a label encoder. Molecule and label embeddings are concatenated together, and we feed the resulting sequence of vectors into a Transformer encoder that learns to predict the label of the query from the context of demonstrations. This approach is visualized in \autoref{fig:method}. Crucially, and similar to in-context learning, \method does not use a fine-tuning step to update the models parameters---or a post-processing step to compute prototypical class centroids---but rather predicts a molecule's property directly from the input demonstrations and query molecule.  

In summary, our primary contribution is to develop an in-context learning algorithm for molecular property prediction. This is not a trivial adaptation. It requires recasting meta-learning as a sequence modeling paradigm that trains from random initialization on molecular property prediction datasets and is invariant to the order of examples within this sequence. On two molecular property prediction benchmarks, our empirical analysis finds this approach outperforms recent few-shot learning baselines at small support sizes and is competitive with the best baselines at large support sizes. Further, \method has low-latency, positioning it for applications in pharmacological discovery, such as high-throguhput screening where small molecule libraries of order $10^9$ are processed to identify promising inhibitors.

\section{Related Work}
\textbf{In-Context Learning.} In-Context learning refers to the process by which a LLM auto-regressively generates text when conditioned on a prompt of demonstrations and a query~\citep{gpt2, brown2020language}. Formally, given pairs of (example, solution) demonstrations as natural language $(x_1, y_1), (x_2, y_2), ..., (x_k, y_k)$ representative of the support set, and a query $x_{k+1}$, the solution to the query is generated auto-regressively as a next-word prediction $P(y_{k+1} | x_{k+1}, (x_k, y_k), ..., (x_1, y_1))$.

While LLMs are trained with a language modeling objective~\citep{mnih2008scalable} or masked language modeling objective~\citep{bert}, a plethora of recent work has focused on developing fine-tuning paradigms that use task-specific instructions or in-context prompts to improve few-shot performance~\citep{flan, metaicl, ict}. Other methods find few-shot performance to be highly sensitive to the order of demonstrations or phrasing of the prompt, and formulate protocols for \textit{prompt-tuning}, or rephrasing the demonstrations into a near-perfect prompt, to reduce the variance of predictions for a given query~\citep{arora2022ask, schick2020exploiting, liu2021makes, gao2020making, jiang2020can}. Nevertheless, even with these improvements, in-context learning performance typically falls behind that of gradient-based and metric-based few-shot learning algorithms~\citep{schick2020exploiting, liu2022few, metaicl, true_fs}. 

Dissimilar from in-context learning, \method is not restricted to language and does not require a 10+ billion parameter LLM that is trained on terrabytes of textual data. Instead, it uses a 100 million parameter model and can train from random initialization on relatively small ($\sim$5 GB) few-shot datasets to surpass the performance of gradient-based and metric-based baselines. 

\textbf{Few-Shot Learning.} While in-context learning is exclusive to LLMs, few-shot learning refers to a general class of techniques that aim to maximize model performance when supervised training data is limited. 
There is significant diversity among few-shot methods, but almost all can be categorized as \textit{gradient-based}, \textit{model-based}, or \textit{metric-based} algorithms. We focus on \textit{model-based} approaches and refer readers to \citet{fs_learning} for a review of \textit{gradient-based} and \textit{metric-based} algorithms.

\method is conceptually closest to a subset of \textit{model-based} algorithms that formulate meta-learning as sequence modeling. We formalize this similarity with a new category of \textit{context-based} algorithms to encompass in-context learning, \emethod, and the subset of \textit{model-based} algorithms that cast meta-learning as sequence modeling. This distinction is critical as \textit{model-based} approaches are highly diverse. They include algorithms that emulate Gaussian processes~\citep{cnp, adkf}, others that modulate parameters during adaptation to new tasks~\citep{ravi2016optimization, film, flute, metanet}, and any approach that employs a model as the algorithm for few-shot learning.

Among \textit{context-based} algorithms, \citet{hochreiter2001learning} use a LSTM~\citep{lstm} to model semi-linear and quadratic functions from only several datapoints, and \citet{graves2014neural, santoro2016meta, kaiser2017learning} improve upon this approach by augmenting the LSTM with external memory. \citet{snail} transition from external memory to within-sequence context that is learned with temporal convolutions and then applies causal attention to extract specific pieces of information from that context. While many of these approaches were once state-of-the-art, they have since been supplanted by \textit{gradient-based} and \textit{metric-based} algorithms that demonstrate stronger performance. \method departs from this trend, outperforming \textit{gradient-based} and \textit{metric-based} baselines on molecular property prediction.

\section{Context Aware Molecule Prediction}
\begin{figure}[t!] 
  \vspace{-0.8cm}
    \centering
    \includegraphics[width=\textwidth]{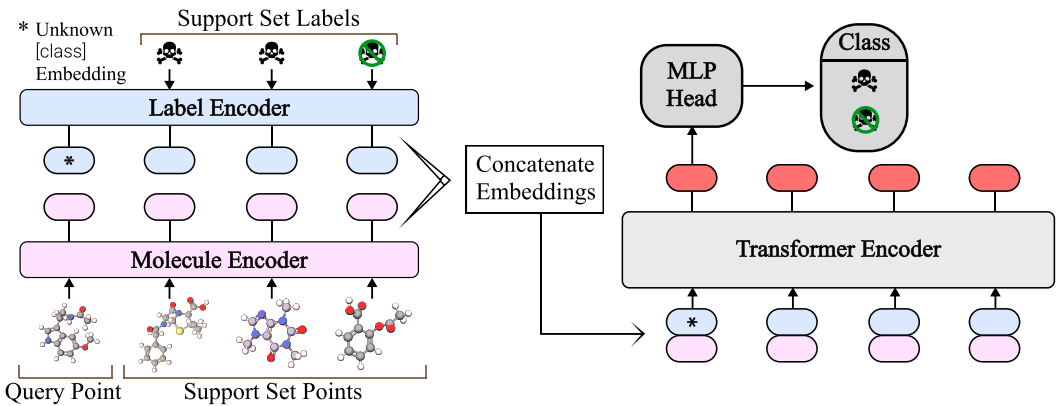}
    \caption{\footnotesize{Overview of \emethod. Query and support set molecules are encoded with a MPNN and then concatenated with their corresponding label embeddings. We feed the resulting sequence of concatenated vectors into a Transformer encoder and extract the transformed query vector from the output sequence to predict its label.}}
    \label{fig:method}
\vspace{-0.4cm}
\end{figure}

\label{sec:method}
In this work, we adapt the ideas underpinning in-context learning---namely learning to predict the label of a point from a context of demonstrations in a single forward pass---to molecular property prediction. However dissimilar from applications in natural language, an in-context learning algorithm for few-shot molecular property prediction should: 
\begin{enumerate}[label={\color{modelpink} \subscript{\textbf{\S}}{{\arabic*}}}, leftmargin=*]
    \itemsep-0.15em 
    \item Handle molecular features and property measurement labels. 
    \item Be invariant to the order of demonstrations in the context.
    \item Train from random initialization on molecular property prediction datasets. 
\end{enumerate}

To this end, we represent a demonstration as the concatenation of a learnable molecule embedding and a learnable label embedding. We use an MPNN~\citep{mpnn} to encode molecules represented as a connected graph of atom-level features and a linear projection to encode one-hot class labels. Aggregating demonstrations across a new dimension, i.e. the sequence dimension, then forms a context.

The query molecule is similarly embedded and concatenated to a learnable unknown [class] label embedding to form a query vector. It is inserted into the sequence forming the context, and this sequence is then fed into a Transformer encoder to learn representations that are conditioned on both the context and query. 
Finally, the transformed query is extracted from the output of the Transformer encoder and subsequently passed through a shallow MLP to predict its label.
A visual depiction of this approach for molecules with toxicity labels is shown in \autoref{fig:method}.
By construction, this formulation satisfies criteria \req{1}, namely the capacity to handle molecular features and property measurement labels, and our findings in Section~\ref{sec:experiments} empirically support \req{3}. Similarly, in \Cref{sec:theory} we show \req{2} holds. 

\method presents several advantages over typical in-context learning. Foremost among them, the molecules and property measurements do not need to be translated into natural language. Instead one can reuse molecule-specific encoders, such as MPNNs~\citep{mpnn} or Graph Transformers~\citep{Graphit, sat, dwivedi2020generalization, mat, honda2019smiles}, to learn molecule embeddings. Moreover, while typical in-context learning requires access to a 10+ billion parameter pre-trained LLM, \method can be trained directly from random initialization on relatively small ($\sim$5 GB) molecule property prediction datasets and fit into the memory of a single GPU at 100 million parameters. Finally, invariance to the ordering of the context allows \method to sidestep \textit{recency bias} that skews in-context learning predictions towards the labels of recently seen demonstrations~\citep{zhao2021calibrate}.

\section{Theoretical Analysis}
\label{sec:theory}
Invariance to permutations is an important property of few-shot learning algorithms. Informally, any permutation in the order of demonstrations should not affect the predictions of the model. This property is violated by in-context learning in LLMs, with recent work showing a bias towards demonstrations which occur in close proximity to the query~\citep{zhao2021calibrate}. 
However as it is invariant to permutations, \method is unaffected by this bias.

\citet{attn_datapoints} show the Transformer encoder $f_\theta$ is permutation equivariant. For convenience, we repeat the properties, lemmas, and definitions developed in this work with language adapted to few-shot and in-context learning in \Cref{sec:supp_theory}. It remains to show the output prediction of \method is invariant to permutations of the input sequence.
\begin{definition}
Define the extraction function as $\mathbbm{1}_i: \{S_1, ..., S_n\} \rightarrow S_i$ that extracts the $i^{th}$ element from a sequence. 
\end{definition}
\begin{lemma}
$\mathbbm{1}_{\pi(i)}$ is permutation-invariant.
\end{lemma}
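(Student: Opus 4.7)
The plan is to unwind the definitions and check that tracking the index of a fixed element through the permutation cancels out to give back that same element. Let $\pi$ be an arbitrary permutation of $\{1,\ldots,n\}$ and let $\pi$ act on a sequence $S=(S_1,\ldots,S_n)$ in the natural way, $\pi S=(S_{\pi^{-1}(1)},\ldots,S_{\pi^{-1}(n)})$, so that the element originally occupying position $i$ in $S$ now sits at position $\pi(i)$ in $\pi S$. Under this convention, the lemma is the claim that $\mathbbm{1}_{\pi(i)}(\pi S)=\mathbbm{1}_i(S)$ for every $\pi$, i.e. the composite ``permute the sequence, then extract the image of the original index $i$'' does not depend on $\pi$.

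First I would evaluate the left-hand side directly from the definition of the extraction function: $\mathbbm{1}_{\pi(i)}(\pi S)$ returns the entry at position $\pi(i)$ of $\pi S$, which by construction is $S_{\pi^{-1}(\pi(i))}=S_i$. Second, I would note that $\mathbbm{1}_i(S)=S_i$ as well, and conclude that the two are equal. This chain reduces to the single identity $\pi^{-1}\circ\pi=\mathrm{id}$, so no further machinery is needed.

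The only real obstacle is notational: because the index in the extraction function has to \emph{follow} the query as it is moved by $\pi$, one must be careful that the index is $\pi(i)$ rather than $\pi^{-1}(i)$, and this choice is tied to how $\pi$ acts on the sequence (as a relabeling of positions vs.\ of contents). I would therefore state the convention for $\pi S$ up front, in one line, and then the computation above goes through mechanically. This result then composes with the permutation-equivariance of $f_\theta$ recalled from \citet{attn_datapoints}: applying the equivariant Transformer and then extracting the $\pi(i)$-th output is the same as extracting the $i$-th output of the Transformer on the unpermuted input, giving the desired permutation-invariance of \method's prediction for the query and verifying requirement \req{2}.
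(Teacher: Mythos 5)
Your proof is correct and uses the same underlying idea as the paper---unwind the definition of the extraction function, apply it to the permuted sequence, and observe that $\pi^{-1}\circ\pi=\mathrm{id}$ collapses the composition to the $i$-th element of the original sequence. Where you differ, and where your version is actually the more careful one, is in pinning down the convention for how $\pi$ acts on a sequence. You state explicitly that $\pi S=(S_{\pi^{-1}(1)},\dots,S_{\pi^{-1}(n)})$, so the element originally at position $i$ moves to position $\pi(i)$; this is precisely the convention used in the paper's appendix definition of permutation-equivariance, so your lemma and the equivariance of $f_\theta$ compose cleanly. The paper's main-text proof instead writes $\pi(\{x_i\}_{i=1}^n)=\{x_{\pi(i)}\}_{i=1}^n$ (the opposite convention) and then lands on the chain $\mathbbm{1}_{\pi(i)}(\{x_{\pi(i)}\}_{i=1}^n)=x_{\pi(i)}=\mathbbm{1}_i(\{x_i\}_{i=1}^n)$; read literally, the final equality asserts $x_{\pi(i)}=x_i$, which is a notational slip. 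Your observation that ``the index in the extraction function has to follow the query as it is moved by $\pi$, so one must be careful whether it is $\pi(i)$ or $\pi^{-1}(i)$'' is exactly the subtlety the paper's proof glosses over, and stating the $\pi S$ convention up front is what makes the one-line computation airtight. The concluding remark about composing with the equivariant Transformer to obtain invariance of the prediction matches the paper's subsequent Lemma~2 and Property.
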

\begin{proof}
For any permutation $\pi: [1, \dots, n] \rightarrow [1, \dots, n]$ applied to a sequence $\{x_i\}_{i=1}^n$, 
\begin{align*}
    \pi(\{x_i\}_{i=1}^n) = \{x_{\pi(i)} \}_{i=1}^n
\end{align*}
and applying $\mathbbm{1}_{\pi(i)}$ returns the original $i^{th}$ element:
\begin{align*}
    \mathbbm{1}_{\pi(i)} \circ \pi(\{x_i\}_{i=1}^n) = \mathbbm{1}_{\pi(i)}(\{x_{\pi(i)} \}_{i=1}^n) = x_{\pi(i)} = \mathbbm{1}_{i}(\{x_i\}_{i=1}^n)
\end{align*}
\end{proof}
\begin{lemma}\label{eqn:1}
    Let $f$ be a permutation-invariant function and $g$ be a permutation-equivariant function. Then $f\circ g(x)$ is permutation-invariant. 
\end{lemma}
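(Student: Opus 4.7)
The plan is to unfold the two definitions directly and compose them in the only order that works. Permutation-equivariance of $g$ means that for any permutation $\pi$, we have $g \circ \pi = \pi \circ g$, i.e.\ shuffling the input and then applying $g$ is the same as applying $g$ and then shuffling its output in the same way. Permutation-invariance of $f$ means $f \circ \pi = f$ for any permutation $\pi$, i.e.\ $f$ ignores the ordering of its input entirely.

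The central step is then a single chain of equalities. For an arbitrary permutation $\pi$ acting on the sequence $x$, I would write
\begin{align*}
(f \circ g)(\pi(x)) \;=\; f(g(\pi(x))) \;=\; f(\pi(g(x))) \;=\; f(g(x)) \;=\; (f \circ g)(x),
\end{align*}
where the second equality uses the equivariance of $g$ to push $\pi$ past $g$, and the third equality uses the invariance of $f$ to absorb $\pi$ entirely. Since $\pi$ was arbitrary, this establishes that $f \circ g$ is permutation-invariant, matching the definition given for invariance earlier in the section.

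There is essentially no obstacle to this argument; the only thing to be careful about is making sure the two notions of ``permutation'' act on compatible objects, i.e.\ that $g$ outputs a sequence of the same length that $f$ consumes and that the $\pi$ appearing on the output side of $g$ is the same permutation that $f$ is invariant to. Since the lemma is stated for generic permutation-invariant $f$ and permutation-equivariant $g$ in the Transformer-sequence setting of \Cref{sec:supp_theory}, this compatibility is built into the definitions, so no additional hypotheses are needed. This lemma then combines with the earlier observation that $\mathbbm{1}_{\pi(i)}$ is permutation-invariant and the cited permutation-equivariance of the Transformer encoder $f_\theta$ to conclude that the extracted query representation, and hence the final \method prediction, is invariant to permutations of the input context.
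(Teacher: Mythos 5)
Your proof is correct and takes essentially the same route as the paper: the identical chain $f(g(\pi(x))) = f(\pi(g(x))) = f(g(x))$, using equivariance of $g$ to pull $\pi$ out and invariance of $f$ to drop it. The only difference is cosmetic: the paper wraps this chain in a ``suppose not $\ldots$ contradiction'' frame, whereas you give the direct argument, which is arguably cleaner since the contradiction adds nothing.
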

\begin{proof}
    Suppose not. Then $f(g \circ \pi(x)) \neq f(g(x))$ for some permutation $\pi$. But $g$ is permutation-equivariant, so $g(\pi(x)) = \pi(g(x))$. Similarly, $f$ is permutation-invariant so $f(g(\pi(x))) = f(\pi(g(x))) = f(g(x))$, hence contradiction.
\end{proof}
\begin{property}
    The output prediction of \method is permutation-invariant. 
\end{property}
\begin{proof}
    The output prediction of \method is MLP($\mathbbm{1}_i(f_\theta(\mathcal{S}^n))$) where MLP stands for Multi-Layer Perceptron and $\mathcal{S}^n$ is the sequence of elements input to $f_\theta$. As the MLP takes as input a single vector, it is trivially permutation-invariant to the ordering of the input. Similarly, by \autoref{eqn:1} $\mathbbm{1}_i(f_\theta(\mathcal{S}^n))$ is permutation-invariant as it is the composition of a permutation-invariation and a permutation-equivariant function. Therefore, MLP($\mathbbm{1}_i(f_\theta(\mathcal{S}^n))$) is permutation-invariant with respect to the ordering of elements in the sequence.
\end{proof}

\section{Analysis of Learning Mechanisms}
\label{sec:analysis}
In this section, we explore the learning mechanisms within \method and investigate how it uses the label information within a context of demonstrations to classify a query molecule. 

\textbf{Dynamic Representations.} \method uses the full context of the support set and query to develop the next layer's representations as this sequence is passed through the layers of a Transformer encoder.
This is a very powerful feature; not only can the representations of the query dynamically adapt to relevant features within the demonstrations, but each demonstration can similarly dynamically adapt to relevant features within the query and other demonstrations from the context. 
\begin{figure*}[t!]
\centering
\vspace{-0.8cm}
\begin{subfigure}[b]{1.0\textwidth}
  \centering
  \includegraphics[width=\linewidth]{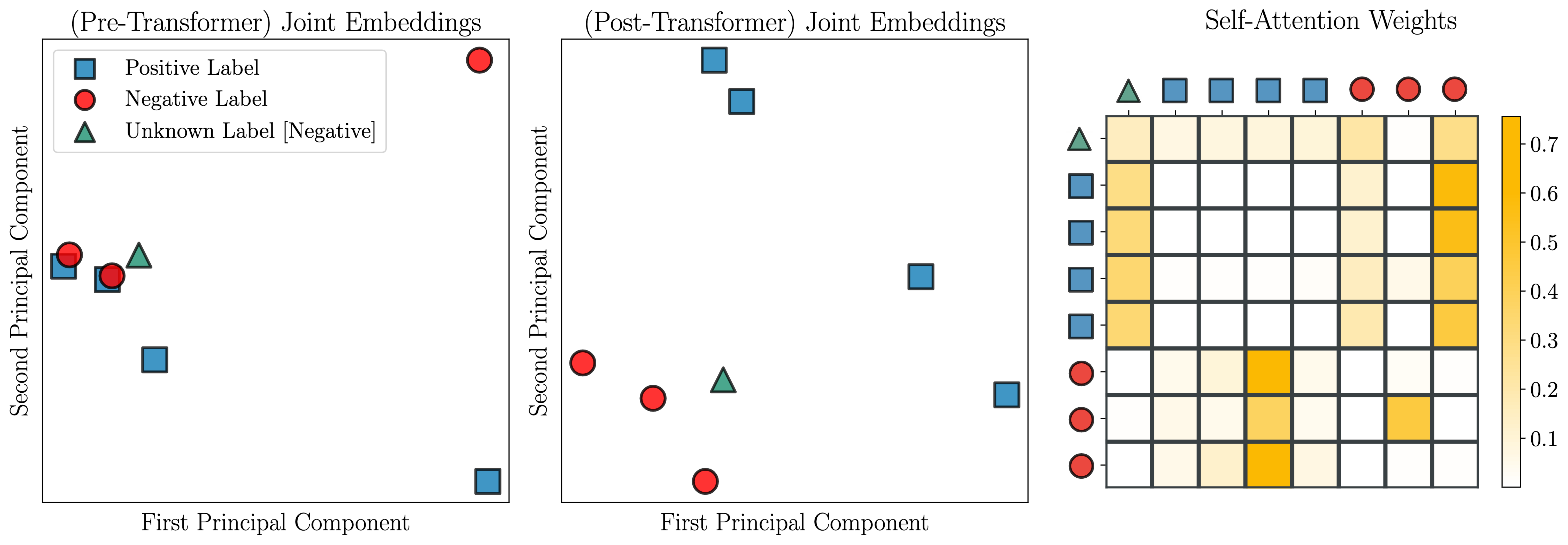}
  \caption{Left: query and context joint embeddings before the Transformer encoder. Center: query and context joint embeddings after the Transformer encoder. Right: self-attention weights for a single head in the first transformer block. This matrix is row-orientated; $\mathcal{M}$[i][j] depicts the weight with which the i$^{th}$ element of the sequence attends to the j$^{th}$ element.}
  \label{fig:analysis1}
\end{subfigure}
\bigskip

\begin{subfigure}{1.0\textwidth}
  \centering
  \includegraphics[width=\linewidth]{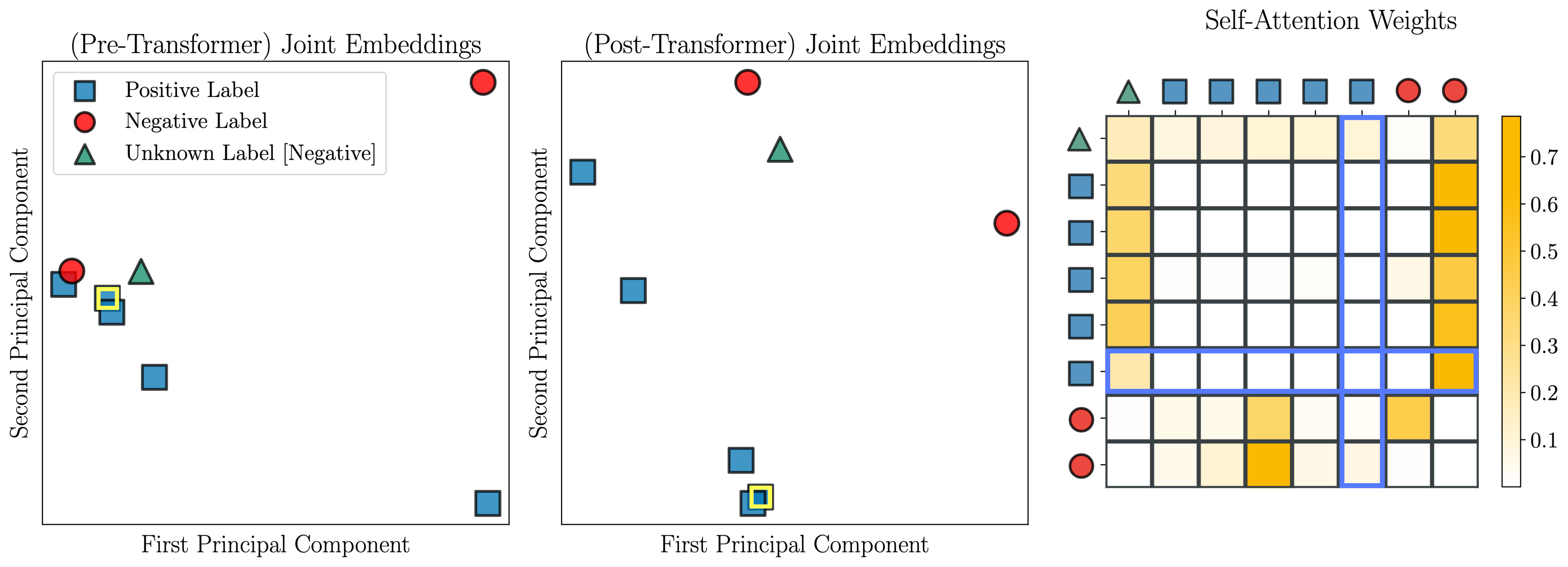}
  \caption{Same as \autoref{fig:analysis1}, but the \num{6}$^{th}$ element of the sequence---highlighted with a yellow outline---has its label flipped from negative to positive. The \num{6}$^{th}$ row and column of the self-attention weights matrix is outlined in blue to highlight how the self-attention weights change with relation to this point.}
  \label{fig:analysis2}
\end{subfigure}
\caption{Visualization of learned representations for 8 molecules from the BACE dataset. PCA is used to reduce the hidden dimension of \num{768} to \num{2}. The query vector is visualized with a green triangle, positive class demonstrations with blue squares, and negative class demonstrations with red circles. The true label of the query is negative.}
\vspace{-0.4cm}
\label{fig:test}
\end{figure*}

To visualize this dynamic, we plot joint label-molecule embeddings of \num{8} molecules from the BACE Classification dataset of the MoleculeNet benchmark~\citep{molnet} after dimensionality reduction by PCA~\citep{pca} in \autoref{fig:analysis1} (left). 
Positively labeled points in the support set are represented by blue squares, negatively labeled points as red circles, and the query as a green triangle. 
The Transformer encoder of \method dynamically adapts support set and query representations in concert to build new representations of each. \autoref{fig:analysis1} (center) depicts the Transformer encoder's output representations and highlights our findings that both the query and context output representations are linearly separable, even when linearly projected into a \num{2}D space spanned by the principal components.

\textbf{Separation by Class Identity.} While it is unsurprising the model can separate demonstration vectors---after all, class identity is concatenated to the molecule embedding in the joint label-molecule space---it is perhaps surprising that the model actually does so. There is no explicit benefit from separating demonstration vectors. No loss function or penalty is applied to demonstration vectors and they are discarded after the transformed query is extracted from the output of the Transformer encoder. Moreover, the query vector can be uniquely identified in the sequence by the [unknown] class embedding in place of a positive or negative label embedding. 

To explore this characteristic, we examine the per-head attention weights learned by the model. Dissimilar to other self-attention weight visualizations, we observe a distinctive \textit{vertical striation} pattern in many of the earlier layer self-attention mechanisms. In effect, many of the self-attention heads develop attention patterns specific to label identity, updating all demonstration vectors belonging to the same class in a similar manner. One such self-attention head is visualized in \autoref{fig:analysis1} (right). In this example, all demonstration vectors with positive label strongly attend to the query and two negatively labeled demonstration vectors. Similarly, all demonstration vectors with a negative label strongly attend to the third positively labeled demonstration vector. 

\textbf{Label Flip Analysis.} To further analyze these observations, we examine what would happen if we ``flip'' the label of a context vector. \autoref{fig:analysis2} depicts our findings when we flip the label of the \num{5}$^{th}$ demonstration---highlighted in yellow---from negative to positive. As visualized in \autoref{fig:analysis2} (left), the pre-encoder joint embeddings remain largely unchanged, simply changing a red circle to a blue square. However, the post-encoder embeddings are significantly different. \autoref{fig:analysis2} (center) shows the \num{5}$^{th}$ demonstration vector switches sides from the linearly separated negatively labeled context vectors to the positively labeled ones. Interpreting this result, \method uses the label embedding to separate demonstration vectors, irrespective to the molecule embedding dimensions in the joint label-molecule embedding space. 

We can also examine how attention maps change. As visualized in the \num{6}$^{th}$ column of \autoref{fig:analysis2} (right) and highlighted with blue outline, not only does the striation pattern persist, but it is also adapted to the new label distribution of the support set. As the label for the \num{5}$^{th}$ context vector flips from negative to positive, all positively labeled demonstrations no longer attend to the \num{5}$^{th}$ demonstration vector. Similarly, the attention pattern of the query vector changes. Previously, it attended to the \num{5}$^{th}$ and \num{7}$^{th}$ demonstrations, both negatively labeled, with approximately equal weight. After flipping the label, it attends to the \num{5}$^{th}$ demonstration with weight similar to that of other positively labeled demonstrations. 

\textbf{Inferences.} To summarize, \method learns to linearly separate both query and context vectors, even when linearly projected into \num{2}D space by PCA. Certain self-attention heads learn a \textit{vertically striated} attention pattern based on label identity so that demonstrations with the same label have similar attention weights. Moreover, the query vector's attention pattern is similarly influenced by the class identity of demonstrations; although this change is less easily visualized. 

Taken together, we posit \method leverages label identity to separate points in the support set and applies an analogous transformation to the query vector, developing representations keyed to the label identity of linearly separated demonstration vectors to determine the query's label. 
This hypothesis would explain why support set vectors become separated, even when there is no explicit gradient-based penalty to do so.
It also accounts for the unique ``vertical striation'' pattern of many self-attention heads as transforming demonstration vectors with the same label to have similar representations may more easily pull the query vector to one of the two linearly separated groups. 
Further exploring this hypothesis, and perhaps replacing it with a more nuanced explanation, is an exciting direction for future work.

\section{Experiments}
\label{sec:experiments}
To analyze the effectiveness of \emethod, we evaluate its performance on the large-scale FS-Mol benchmark~\citep{fsmol} as well as the BACE Classification dataset from MoleculeNet~\citep{molnet} in the cross-domain setting.

\subsection{FS-Mol Evaluation}
\label{sec:fsmol_eval}
\textbf{Dataset Statistics.} FS-Mol is a large-scale, few-shot molecular property prediction benchmark composed of \num{5120} binary classification tasks. Each task represents a protein, and the objective of the benchmark is to predict if a small molecule will activate (or inhibit) this protein. The dataset spans \num{233786} different small molecules and the testing benchmark is composed of \num{157} protein targets. 

\textbf{Experimental Design.} FS-Mol adheres to a typical few-shot learning paradigm. Models engage in large-scale pre-training on the train set, a held-out validation set is used for early stopping with a window-size of 10, and the model checkpoint with lowest validation loss is then evaluated on the test benchmark. We select Multi-Task Learning (MT), Model-Agnostic Meta-Learning (MAML)~\citep{maml}, Prototypical Networks (ProtoNet)~\citep{proto}, Conditional Neural Processes (CNP)~\citep{cnp}, and Adaptive Deep Kernel Fitting with Implicit Function Theorem (ADKF-IFT)~\citep{adkf} as our baselines. Another recent meta-learning baseline is MHNfs~\citep{mhnfs}; however as this approach is complex---augmenting each support set with \num{25000} context molecules and using \num{15000} A100 GPU hours for hyperparameter tuning---we were unable to adapt their method to our experimental design. We structure each baseline to use Graph Neural Network (GNN) features by taking as input a connected graph of atom-level features. 

While recent work has evaluated FS-Mol in a multi-modal setting by including ECFP features and/or PhysChem descriptors, we focus our empirical analysis on GNN features as they are the predominant modality used in real-world applications of few-shot molecular property prediction~\citep{stokes2020deep, liu2023deep, wong2023discovering}. Additionally, computing ECFP fingerprints on-the-fly with RDKit can slow down inference by approximately \num{1.5}$\times$. This latency penalty adds up when running high-throughput screening on a dataset like the Enamine 9 billion REAL database, potentially turning weeks of processing into months. 

With regards to evaluation criteria, we report $\Delta$AUPRC. $\Delta$AUPRC is the evaluation criteria employed by FS-Mol, and it measures the change of the area under the precision-recall curve with respect to a random classifier. This criteria is especially appropriate for highly unbalanced query sets, where the area under the precision-recall curve of a random classifier is equal to the percentage of positive examples in the query set. For unbalanced tasks, the relative change in performance is often more informative than the absolute area under the precision-recall curve.

\begin{table*}[t]
  \centering
  \small
  \caption{Performance on 157 datasets of the FS-Mol few-shot learning benchmark. As in \citep{fsmol}, we report mean $\Delta$AUPRC as well as the standard error across 10 training runs.}
  \label{table:fs_mol}
  \resizebox{\linewidth}{!}{%
  \begin{tabular}{llllll}
    \toprule
      \multirow{2}{*}{\begin{tabular}{l}Approach\end{tabular}} & \multicolumn{5}{c}{Aggregate $\Delta$AUPRC on FS-Mol}\\ 
      \cmidrule(lr){2-6}
      & \multicolumn{1}{c}{8 support} & \multicolumn{1}{c}{16 support} & \multicolumn{1}{c}{32 support} & \multicolumn{1}{c}{64 support} & \multicolumn{1}{c}{128 support}\\ 
      \midrule
      \multicolumn{6}{l}{\textbf{Uses GNN Features}}\\
      GNN-ST~\citep{fsmol} & \multicolumn{1}{c}{---} & $0.021 \pm 0.004$ & $0.027 \pm 0.005$ & $0.031 \pm 0.005$ & $0.052 \pm 0.005$ \\
      MT~\cite{fsmol} & $0.088 \pm 0.005$ & $0.112 \pm 0.006$ & $0.143 \pm 0.006$ & $0.176 \pm 0.008$ & $0.222 \pm 0.008$  \\
      MAML~\cite{fsmol} & $0.153 \pm 0.008$ & $0.160 \pm 0.009$ & $0.166 \pm 0.008$ & $0.173 \pm 0.008$ & $0.192 \pm 0.009$ \\
      MAT~\cite{mat} & \multicolumn{1}{c}{---} & $0.051 \pm 0.005$ & $0.069 \pm 0.005$ & $0.092 \pm 0.007$ & $0.136 \pm 0.009$\\
      CNP~\cite{cnp} & $0.174 \pm 0.010$ & $0.180 \pm 0.010$ & $0.186 \pm 0.010$ & $0.189 \pm 0.010$ & $0.205 \pm 0.010$\\
      ProtoNet~\cite{fsmol} & $0.146 \pm 0.007$ & $0.185 \pm 0.008$ & $0.224 \pm 0.009$ & ${\color{green} \mathbf{0.256 \pm 0.009}}$ & ${\color{green} \mathbf{0.290 \pm 0.009}}$ \\ 
      ADKF-IFT~\cite{adkf} & $0.079 \pm 0.005$& $0.093 \pm 0.006$& $0.108 \pm 0.007$& $0.130 \pm 0.008$& $0.177 \pm 0.009$ \\
      \method & {\color{green} $\mathbf{0.205 \pm 0.009}$} & {\color{green} $\mathbf{0.229 \pm 0.009}$} & ${\color{green} \mathbf{0.246 \pm 0.010}}$ & ${\color{green} \mathbf{0.257 \pm 0.010}}$ & $0.273 \pm 0.010$ \\
    \bottomrule
  \end{tabular}
}
\end{table*}

\textbf{Training and Model Architecture} We follow the implementation and hyperparameters of \citet{fsmol} for the MT, MAML, and ProtoNet baselines and similarly adhere to the implementation and hyperparameters developed by \citet{adkf} to benchmark CNP and ADKF-IFT. For the molecule encoder of \emethod, we adopt an MPNN~\cite{mpnn} architecture identical to the MT baseline, and the label encoder is a learnable linear projection from one-hot class labels. The Transformer encoder follows the base variant described by ~\citet{vit}, and overall \method has 100.2 million parameters. 

We adhere to the hyperparameters of the MT baseline when training \method but perform a small hyperparameter search over \#warmup steps = \{$100, 2000$\} and dropout = \{$0, 0.05, 0.1, 0.2$\}. Choosing \#warmup steps = $2000$ and dropout = $0.2$ improves the validation loss from $0.588$ with MT hyperparameters to $0.575$. Further details related to reproducibility may be found in the Appendix, and our released code and model weights may be referenced to fully reproduce our results. 

\textbf{Findings.} \autoref{table:fs_mol} summarizes our experimental findings across support sizes of $\{8, 16, 32, 64, 128\}$. To our surprise---and contrary to findings in the Natural Language Processing community comparing in-context learning with classic meta-learning algorithms~\citep{liu2022few, true_fs, metaicl}---\method outperforms all baselines at $\{8, 16, 32\}$ support sets. Moreover, it is competitive with the best performing approach, ProtoNet, at larger support sizes. Similarly surprising, the ADKF-IFT baseline significantly underperforms when evaluated with GNN features, even though this approach is state-of-the-art in the multi-modal setting.

\subsection{BACE Evaluation}
\textbf{Dataset Statistics and Experimental Design.} BACE is a binary prediction task that measures if a small molecule will inhibit the human \textit{beta-secretase 1} enzyme. The dataset contains measurements of \num{1513} small molecules and could have been included as another task in the FS-Mol benchmark. We formulate each dataset as a cross-domain few-shot benchmark, randomly sampling a support set from each dataset and designating the remaining examples as the query set. All methods are meta-trained on FS-Mol similar to Section \ref{sec:fsmol_eval}. We choose AUPRC, or the area under the precision-recall curve, to align our evaluation criteria with \citet{molnet}. Similar to our evaluation on FS-Mol, all models use GNN features, and we evaluate each support size across 10 different random seeds. 

\textbf{Findings.} Our findings are presented in \autoref{table:bace} and align with our empirical results on FS-Mol. While \method surpasses the performance of all baselines at each support size, this different tends to decrease as the cardinality of the support set increases. Specifically, \method surpasses the performance of the next strongest baseline by $15\%$ at a support size of \num{8}, by $15\%$ at \num{16}, by $8\%$ at \num{32}, by $6\%$ at \num{64}, and by $9\%$ at \num{128}.

\begin{table*}[t]
  \vspace{-0.2cm}
  \centering
  \small
  \caption{Performance on the BACE Classification dataset from the MoleculeNet benchmark. We report mean AUPRC as well as the standard error across 10 training runs with different random seeds.}
  \vspace{-0.2cm}
  \label{table:bace}
  \resizebox{\linewidth}{!}{%
  \begin{tabular}{llllll}
    \toprule
      \multirow{2}{*}{\begin{tabular}{l}Approach\end{tabular}} & \multicolumn{5}{c}{Cross-Domain Meta-Learning AUPRC on BACE Classification}\\ 
      \cmidrule(lr){2-6}
      & \multicolumn{1}{c}{8 support} & \multicolumn{1}{c}{16 support} & \multicolumn{1}{c}{32 support} & \multicolumn{1}{c}{64 support} & \multicolumn{1}{c}{128 support}\\ 
      \midrule
      \multicolumn{6}{l}{\textbf{Uses GNN Features}}\\
      MT~\cite{fsmol} & $0.581 \pm 0.038$ & $0.547 \pm 0.064$ & $0.600 \pm 0.063$ & $0.0652 \pm 0.051$ & $0.696 \pm 0.037$ \\
      MAML~\cite{fsmol} & $0.505 \pm 0.021$ & $0.51 \pm 0.026$ & $0.508 \pm 0.035$ & $0.548 \pm 0.064$ & $0.568 \pm 0.075$ \\
      CNP~\cite{cnp} & $0.508 \pm 0.026$& $0.503 \pm 0.029$& $0.496 \pm 0.021$& $0.504 \pm 0.018$& $0.507 \pm 0.009$\\
      ProtoNet~\cite{fsmol} & $0.564 \pm 0.062$ & $0.594 \pm 0.057$ & $0.650 \pm 0.045$ & $0.690 \pm 0.021$ & $0.711 \pm 0.023$ \\
      ADKF-IFT~\cite{adkf} & $0.528 \pm 0.045$& $0.548 \pm 0.040$& $0.570 \pm 0.058$& $0.625 \pm 0.052$& $0.677 \pm 0.013$\\
      \method & {\color{green} $\mathbf{0.651 \pm 0.033}$} & {\color{green} $\mathbf{0.686 \pm 0.033}$} & {\color{green} $\mathbf{0.700 \pm 0.030}$} & {\color{green} $\mathbf{0.728 \pm 0.011}$} & {\color{green}$\mathbf{0.737 \pm 0.009}$} \\
    \bottomrule
  \end{tabular}
}
\end{table*}

\begin{table*}[t]
  \centering
  \small
  \caption{\footnotesize{Inference-time latency as measured by wall-clock time on an A100 GPU to evaluate the \num{1513} molecules in BACE across \num{16}, \num{32}, \num{64}, and \num{128} support set sizes. Each evaluation is repeated \num{10} times across different random seeds. We exclude the \num{8}-support as these experiments were run on a different server.}}
  \vspace{-0.2cm}
  \label{table:latency}
  \resizebox{\linewidth}{!}{%
  \begin{tabular}{llcccccc}
  \toprule 
    & & MT & MAML & ProtoNet & \method & CNP & ADKF-IFT\\
    \midrule 
    \multirow{2}{*}{\begin{tabular}{l} Latency\end{tabular}} & Minutes \& Seconds & 3m 11s & 3m 14s & 16m 47s & 3m 44s & 13m 17s & 14m 18s \\
    & Normalized to \method & {\color{green} \textbf{0.85}$\mathbf{\times}$} & {\color{green} \textbf{0.87}$\mathbf{\times}$} & {\color{myred} \textbf{4.50}$\mathbf{\times}$} & {\color{green}\textbf{1}$\mathbf{\times}$} & {\color{myred} \textbf{3.56}$\mathbf{\times}$} & {\color{myred} \textbf{3.83}$\mathbf{\times}$} \\
    \bottomrule
  \end{tabular}
}
\end{table*}

\subsection{Inference-Time Latency}
Inference-time latency is critical for few-shot molecular property prediction as the most common application for these methods is high-throughput screening. With the space of pharmacologiaclly active compounds estimated to be on the order of $10^{60}$, and databases of order $10^{10}$ available for screening, even small differences in inference-time latency may lead to months of difference in screening times.

Each class of meta-learning algorithms has different inference-time behavior. Gradient-based methods require the model to be finetuned on the support set before query set examples can be evaluated, and metric-based methods require the computation of class centroids. Context-based methods sidestep both complexities, but instead require the full support set to be fed into the model alongside each query example. To quantify the extent to which these constraints affects inference-time latency, we offer a basic wall-clock time analysis of the time it takes to complete inference on the BACE dataset in \autoref{table:latency}. We note that our implementations are not optimized for serving, but our findings may offer a rough approximation to serving-time latency.

Our findings suggest MT, MAML, and \method all share roughly the same inference time latency. While the forward pass of \method is slower than that of MT and MAML, this difference is offset by the requirement that MT and MAML finetune the model's parameters on the support set. On the other hand, ProtoNet, ADKF-IFT, and CNP are significantly slower. Maximizing the GP marginal log likelihood with an L-BFGS (2nd order) optimizer during inference slows ADKF-IFT, and similarly, maximizing the conditional likelihood of a random subset of query molecules at each step of inference slows CNP. For ProtoNet, we attribute the increased latency to computing the full covariance matrix among support set points in the Mahalanobis distance calculation. 

\begin{figure}[t!] 
    \centering
    \includegraphics[width=\textwidth]{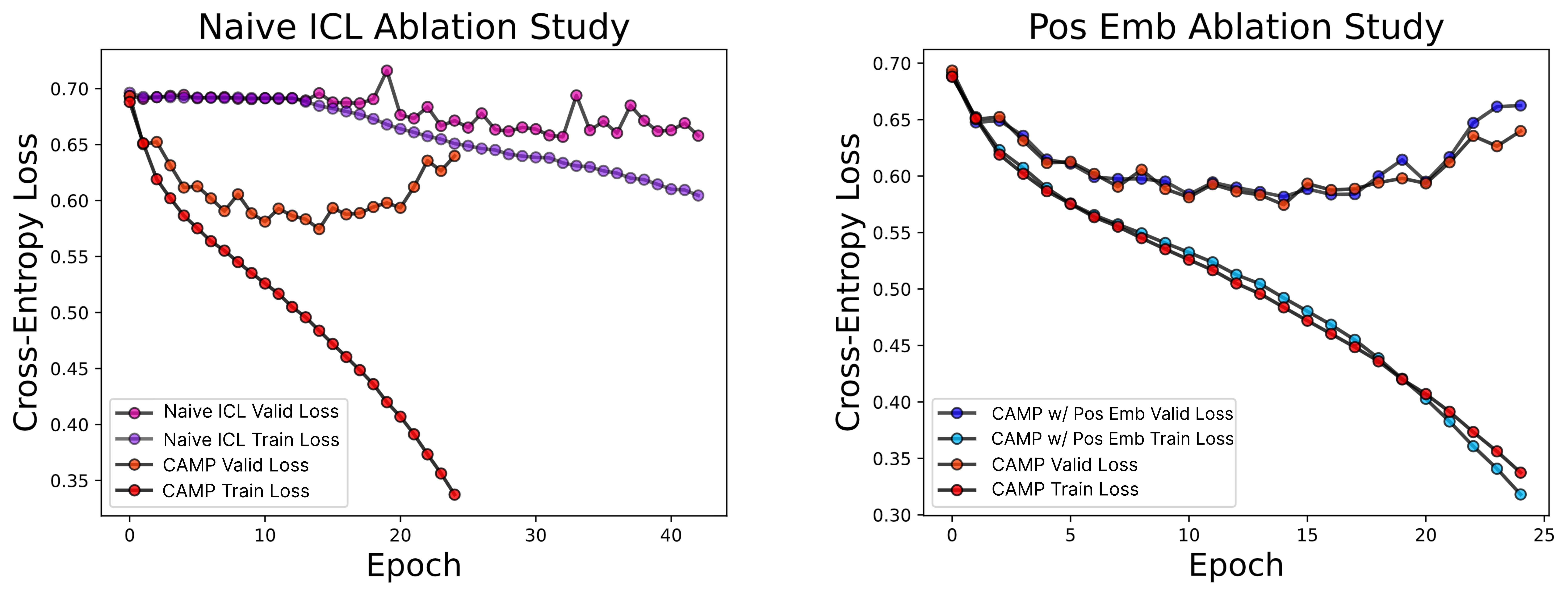}
    \caption{\footnotesize{Left: convergence of Naive ICL compared to \emethod. Right: convergence of \method augmented with fixed positional embeddings~\citep{attention} compared to no positional embeddings. For both plots, the full training run is displayed even though early stopping with a window size of 10 epochs is employed. }}
    \label{fig:ablations}
\vspace{-0.4cm}
\end{figure}

\subsection{Ablation Studies}
\textbf{Naive ICL.} Our initial attempts at developing an in-context learning algorithm for few-shot molecular property prediction attempted to emulate the in-context learning paradigms found in LLMs. Notably, rather than concatenating the molecule embedding with the label embedding, we would interleave both along the sequence dimension and pass the entire sequence through a Transformer encoder. Despite many formulations of this common pattern, convergence would stall during the initial epochs of training, and although our best formulations would eventually begin converging, they would exhibit signs of instability throughout training. \autoref{fig:ablations} (left) displays the convergence of the best model from these studies, termed \textit{Naive ICL}, and even though validation convergence is unstable, its performance is comparable to that of gradient-based meta learning algorithms at small support splits when evaluated on FS-Mol. 

\textbf{Positional Embeddings.} Our second study relates to the importance (or lack thereof) of positional embeddings. Static positional embeddings were found to be crucial to language modeling performance in Transformers~\citep{attention} and learnable positional embeddings are similarly vital for Vision Trasformer~\citep{vit} and Audio-Spectrogram Transformer~\citep{ast} performance. However, for the purpose of in-context learning, we question the relevancy of uniquely identifying the position of demonstrations in the context. The ordering of said examples is irrelevant, and while encoding knowledge related to the total number of examples into the learning process may be beneficial, rearranging the order of demonstrations in the context should not influence the model's predictions. Recent work has found the ordering of demonstrations to be problematic for in-context learning applications in natural language, biasing predictions towards the labels of demonstrations close to the query~\citep{zhao2021calibrate}. 

We investigate this question in \autoref{fig:ablations} (right). \method without positional embeddings slightly outperforms the positional embeddings variant: $0.5746$ vs. $0.5817$ lowest validation loss. Moreover, adding positional embeddings may cause the model to overfit more quickly: $0.3374$ final training loss for \method vs. $0.3181$  for the variant with positional embeddings. This finding supports our hypothesis that the ordering of demonstrations within a context should be irrelevant to predictive performance.

\section{Conclusion}
\label{sec:conclusion}
In this work, we develop a new algorithm for few-shot molecular property prediction. 
Our approach reformulates the ideas underpinning in-context learning from a characteristic of LLMs to a few-shot learning algorithm that leverages a context of (molecule, property measurement) demonstrations to directly predict the property of a query molecule in a single forward pass. 
Our analysis suggests the learning mechanisms underpinning \method fundamentally differ from those in other Transformer-based architectures, and that it learns to classify query molecules by separating demonstration vectors by their class identities. 

Further, our empirical evaluation indicates \method outperforms recent meta-learning algorithms at small support sizes. It is competitive with the best performing baseline, Prototypical Networks, at large support sizes while being approximately \num{4.5}$\times$ faster during inference. 
This result differs from comparisons in natural language that find the performance of typical meta-learning algorithms often surpasses that of in-context learning.

It is our hope that this work shows the benefits of in-context learning can be extended to applications outside of natural language. For instance, one might adapt our approach to few-shot image classification, replacing the molecule encoder with an image feature extractor. Other engaging avenues for future work may investigate its learning mechanisms or develop refinements to improve its performance at large support sizes.

\bibliography{iclr2024_conference}
\bibliographystyle{iclr2024_conference}
\newpage
\appendix
\section{Appendix}
\subsection{Supplementary Theoretical Analysis}
\label{sec:supp_theory}
In \Cref{sec:theory}, we suppose the Transformer encoder $f_\theta$ is permutation equivariant. This result was proved in~\citet{attn_datapoints}, and for convenience, we repeat the properties, lemmas, and definitions developed in this work with language adapted to few-shot and in-context learning. \citet{attn_datapoints} can be referenced for the corresponding proofs.
\begin{definition}
A function $f_\theta:\mathcal{S}^n \rightarrow \mathcal{S}^n$ is \textit{permutation-equivariant} if for any permutation $\pi: [1, \dots, n] \rightarrow [1, \dots, n]$ applied to the sequence elements of $\mathcal{S}^n$, we have for all $i$, $f(S_1, \dots, S_n)[i] = f(S_{\pi^{-1}(1)}, \dots, S_{\pi^{-1}(n)})[\pi(i)]$.
\end{definition}
\begin{lemma}
Any function of the form $f_\theta(S_1, \dots, S_n) = (g(S_1), \dots, g(S_n))$ for some $g$ is permutation-equivariant. These functions are denoted as `element-wise operations', as they consist of the same function applied to each of element of the sequence.
\end{lemma}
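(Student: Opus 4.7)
The plan is a direct unpacking of the definition of permutation-equivariance against the element-wise structure of $f_\theta$. Fix an arbitrary permutation $\pi$ of $[1,\dots,n]$ and an arbitrary index $i \in [1,\dots,n]$. The goal is to show
\[
f_\theta(S_1, \dots, S_n)[i] \;=\; f_\theta(S_{\pi^{-1}(1)}, \dots, S_{\pi^{-1}(n)})[\pi(i)].
\]

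First I would evaluate the left-hand side: by the assumed form $f_\theta(S_1,\dots,S_n) = (g(S_1), \dots, g(S_n))$, the $i$-th output coordinate is simply $g(S_i)$. Next I would evaluate the right-hand side on the permuted input. Writing $T_k := S_{\pi^{-1}(k)}$, the hypothesis on $f_\theta$ gives $f_\theta(T_1,\dots,T_n) = (g(T_1),\dots,g(T_n))$, so reading off coordinate $\pi(i)$ yields $g(T_{\pi(i)}) = g(S_{\pi^{-1}(\pi(i))}) = g(S_i)$. Both sides therefore equal $g(S_i)$, and since $i$ and $\pi$ were arbitrary, $f_\theta$ is permutation-equivariant.

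There is no real obstacle here; the lemma is essentially a bookkeeping exercise about composing $\pi$ with $\pi^{-1}$. The only subtlety worth flagging explicitly is the convention: the definition in the excerpt places $\pi^{-1}$ on the inputs and $\pi$ on the output index, and a careless reader might expect $\pi$ on both sides, so I would state the index substitution $\pi^{-1}(\pi(i)) = i$ explicitly when reading off the $\pi(i)$-th coordinate. Beyond that, the proof reduces to two lines of substitution, and no properties of $g$ (continuity, linearity, etc.) are needed — the result holds for any function $g$ whatsoever.
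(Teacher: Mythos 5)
Your proof is correct and is the natural direct argument: both sides reduce to $g(S_i)$ once you substitute $T_k := S_{\pi^{-1}(k)}$ and use $\pi^{-1}(\pi(i)) = i$. Note that the paper itself does not supply a proof for this lemma---it restates it from the cited reference and defers there for the argument---so there is nothing in the source to compare against, but your two-line unpacking is precisely the standard proof one would find for this statement, and your explicit flagging of the $\pi^{-1}$-on-inputs, $\pi$-on-output-index convention is a worthwhile clarification given that the convention is the only place a careless reader could slip.
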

\begin{lemma}
The composition of permutation-equivariant functions is permutation-equivariant.
\end{lemma}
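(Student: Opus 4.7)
The plan is to unpack the definition of permutation-equivariance twice, once for each function in the composition, and chain the resulting identities. Let $f, g : \mathcal{S}^n \to \mathcal{S}^n$ be permutation-equivariant, fix an arbitrary permutation $\pi: [1,\dots,n]\to[1,\dots,n]$ and an arbitrary input sequence $(S_1, \dots, S_n)$, and set $h = f \circ g$. My goal is to verify the defining identity $h(S_1, \dots, S_n)[i] = h(S_{\pi^{-1}(1)}, \dots, S_{\pi^{-1}(n)})[\pi(i)]$ for every index $i$.

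First I would apply equivariance of the inner function $g$. Writing $T = g(S_1, \dots, S_n)$ and $T' = g(S_{\pi^{-1}(1)}, \dots, S_{\pi^{-1}(n)})$, the definition gives $T[i] = T'[\pi(i)]$ for all $i$, which is the coordinate-wise statement that $T' = (T_{\pi^{-1}(1)}, \dots, T_{\pi^{-1}(n)})$. Next I would apply equivariance of the outer function $f$ to the sequence $T$ with the same permutation $\pi$: this yields $f(T)[i] = f(T_{\pi^{-1}(1)}, \dots, T_{\pi^{-1}(n)})[\pi(i)] = f(T')[\pi(i)]$. Substituting back in the definition of $h$ gives $h(S_1, \dots, S_n)[i] = f(T)[i] = f(T')[\pi(i)] = h(S_{\pi^{-1}(1)}, \dots, S_{\pi^{-1}(n)})[\pi(i)]$, which is exactly the definition of permutation-equivariance for $h$. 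Since $\pi$ and the input were arbitrary, $f\circ g$ is permutation-equivariant.

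The argument is purely definitional; the only bookkeeping step is the translation between the index-wise equivariance of $g$ and the sequence-level identification $T' = (T_{\pi^{-1}(1)}, \dots, T_{\pi^{-1}(n)})$, after which the outer application of $f$ slots in mechanically. There is no real obstacle here, and no analytic or structural property of $f$ or $g$ is needed beyond the defining equation itself. The same template would extend to any finite composition by induction, but the binary case suffices for the use made of this lemma in \Cref{sec:theory}.
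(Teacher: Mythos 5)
Your proof is correct, and the key bookkeeping step — translating the index-wise identity $T[i] = T'[\pi(i)]$ into the sequence-level identification $T' = (T_{\pi^{-1}(1)}, \dots, T_{\pi^{-1}(n)})$ so that the outer equivariance of $f$ can be applied — is handled accurately. The paper itself does not spell out a proof for this lemma (it defers to the cited reference), but the definitional unwinding you give is the standard argument and exactly what is needed here.
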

\begin{lemma}
Let $W \in \mathbb{R}^{n \times m_1}$ and $S \in \mathbb{R}^{m_2 \times n}$. The function $S \mapsto SW$ is permutation-equivariant.
\end{lemma}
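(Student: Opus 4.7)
The plan is to reduce the equivariance claim to associativity of matrix multiplication, after first fixing the convention that the sequence dimension of $S \in \mathbb{R}^{m_2 \times n}$ indexes rows (so $S$ is viewed as a length-$m_2$ sequence of feature vectors in $\mathbb{R}^n$) and $W \in \mathbb{R}^{n \times m_1}$ is a fixed linear map on the feature dimension. Under this reading, the map $S \mapsto SW$ never mixes across sequence positions, so it should be trivially equivariant; the work lies only in writing that observation in the matrix language of the definition.

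First, I would encode any permutation $\pi$ on $\{1,\dots,m_2\}$ as a permutation matrix $P_\pi \in \{0,1\}^{m_2 \times m_2}$ whose left action on $S$ permutes rows: row $i$ of $P_\pi S$ equals row $\pi(i)$ of $S$. This turns the element-wise permutation action in the definition of permutation-equivariance into a single matrix operation, and is the only nontrivial setup step.

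Next, by associativity, $(P_\pi S) W = P_\pi (S W)$. Reading row $i$ on both sides: the left side is $S_{\pi(i),:}\, W$, and the right side is $(SW)_{\pi(i),:}$; these agree, which is exactly the equivariance condition from the earlier definition (the $\pi(i)$-th output coordinate of $f$ on the permuted sequence equals the $i$-th output coordinate of $f$ on the original sequence, where $f(S) = SW$). Alternatively, I could observe that $S \mapsto SW$ acts row-wise via the same map $s \mapsto sW$ and then simply invoke the preceding lemma that element-wise operations are permutation-equivariant.

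There is no real obstacle here: the statement collapses to one application of associativity once the permutation is rewritten as a matrix. The only care needed is translating between the element-indexed definition of equivariance and the matrix form $P_\pi S$ of the permutation action; that translation is essentially the entire content of the proof.
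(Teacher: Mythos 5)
Your proof is correct: reading the rows of $S$ as the sequence elements, $S \mapsto SW$ applies the same map $s \mapsto sW$ to every row, so it is permutation-equivariant either by the element-wise-operation lemma or, equivalently, by the associativity identity $(P_\pi S)W = P_\pi(SW)$. The paper itself defers this proof to the cited reference, and the argument given there is essentially the same as yours, so there is nothing further to reconcile.
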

\begin{lemma}
The function $X \mapsto Att(SW^Q, SW^K, SW^V)$ is permutation-equivariant.
\end{lemma}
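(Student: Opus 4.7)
The plan is to represent each permutation by a left-multiplying permutation matrix and then propagate it through both the linear projections and the attention computation. Let $P_\pi \in \mathbb{R}^{n\times n}$ be the permutation matrix with $(P_\pi)_{ij} = \delta_{i,\pi(j)}$, so that permuting the input sequence by $\pi$ corresponds to replacing $S \in \mathbb{R}^{n \times d}$ by $P_\pi S$ (one checks $(P_\pi S)_j = S_{\pi^{-1}(j)}$, matching the paper's definition of the permuted input). Since $W^Q, W^K, W^V$ act on the right, we immediately have $Q' = P_\pi SW^Q = P_\pi Q$, and analogously $K' = P_\pi K$ and $V' = P_\pi V$. It therefore suffices to prove
\begin{align*}
Att(P_\pi Q,\, P_\pi K,\, P_\pi V) \;=\; P_\pi \cdot Att(Q, K, V),
\end{align*}
which is exactly the matrix form of the equivariance condition $f(S)[i] = f(P_\pi S)[\pi(i)]$.

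The heart of the argument is understanding how row-wise softmax commutes with left- and right-multiplication by $P_\pi$. Writing $Att(Q,K,V) = \softmax(QK^\top)\,V$ with softmax applied row-wise, substitution yields $\softmax(P_\pi QK^\top P_\pi^\top) \cdot P_\pi V$. A short entry-wise computation shows $\softmax(P_\pi M P_\pi^\top) = P_\pi \,\softmax(M)\, P_\pi^\top$ for any $M$: left-multiplication by $P_\pi$ only permutes rows and, since softmax normalizes each row independently, the per-row normalizers are simply carried along; right-multiplication by $P_\pi^\top$ permutes entries within each row, and softmax is equivariant under such within-row reorderings. Putting this together,
\begin{align*}
\softmax(P_\pi QK^\top P_\pi^\top) \cdot P_\pi V \;=\; P_\pi\,\softmax(QK^\top)\,P_\pi^\top P_\pi V \;=\; P_\pi \cdot Att(Q,K,V),
\end{align*}
using $P_\pi^\top P_\pi = I$.

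The main obstacle is really just the bookkeeping: because keys as well as queries are permuted, a $P_\pi^\top$ appears inside the softmax, and one must verify both that row-wise softmax commutes with a row permutation from the left and with a column permutation from the right, and finally that the $P_\pi^\top$ is canceled against the $P_\pi$ on the permuted value matrix. Minor auxiliary details, such as the $1/\sqrt{d_k}$ scaling or multi-head splitting, cause no trouble since they are applied entry-wise or head-wise in a way compatible with the permutation structure. Translating the resulting matrix identity $f(P_\pi S) = P_\pi f(S)$ back into index form recovers precisely the paper's definition of permutation-equivariance, completing the plan.
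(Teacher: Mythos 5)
Your proof is correct, and it is the standard argument for this fact. Note that the paper itself does not prove this lemma; it states it in the supplementary appendix and defers the proof to the cited work (\citet{attn_datapoints}), so there is no in-paper proof to compare against. Your permutation-matrix formulation is clean: the key step, $\softmax(P_\pi M P_\pi^\top) = P_\pi\,\softmax(M)\,P_\pi^\top$, is verified correctly (left permutation reindexes rows and their independent normalizers; right permutation by $P_\pi^\top$ reorders within a row, to which softmax is equivariant since the normalizer is a symmetric function of the row entries), and the cancellation $P_\pi^\top P_\pi = I$ against the permuted value matrix closes the argument. One small point worth making explicit in a writeup: the lemma's statement writes ``$X \mapsto Att(SW^Q, SW^K, SW^V)$'' but the bound variable should be $S$, and with $S \in \mathbb{R}^{n\times d}$ as rows of sequence elements, the right-multiplications by $W^Q, W^K, W^V$ are the row-wise (hence permutation-equivariant) linear projections of the preceding lemma; you use exactly this convention, which is the intended one. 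Your remark on the $1/\sqrt{d_k}$ scaling and multi-head splitting being compatible is also correct, since both are applied per-entry or per-head and commute with left-multiplication by $P_\pi$.
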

\begin{lemma}
The following hold:
\begin{enumerate}
    \item Multihead self-attention is permutation-equivariant.
    \item If $f$ and $g$ are permutation-equivariant, then the function $x \mapsto g(x) + f(x)$ is also permutation-equivariant.
    \item The residual connection in a Transformer encoder block is permutation-equivariant. 
    \item The Transformer encoder block itself is permutation-equivariant. 
\end{enumerate}
\end{lemma}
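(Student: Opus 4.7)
The plan is to prove the four claims in the order stated, reusing the earlier results collected in \Cref{sec:supp_theory}: element-wise operations are permutation-equivariant, compositions of permutation-equivariant functions are permutation-equivariant, right-multiplication $S \mapsto SW$ is permutation-equivariant, and single-head attention $Att(SW^Q, SW^K, SW^V)$ is permutation-equivariant. Representing a permutation $\pi$ on $n$ rows by its permutation matrix $P$, equivariance of $f$ becomes the algebraic identity $f(PS) = Pf(S)$, and this is the form I would manipulate throughout.

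For claim 1, I would write multihead self-attention as the concatenation, along the feature dimension, of $h$ parallel single-head outputs $H_i = Att(SW^Q_i, SW^K_i, SW^V_i)$, followed by right-multiplication with an output projection $W^O$. Each $H_i$ is equivariant by the prior attention lemma, so $H_i(PS) = P H_i(S)$. Horizontal concatenation commutes with row permutation, since permuting rows acts independently on each block, giving $[H_1(PS) \mid \cdots \mid H_h(PS)] = P [H_1(S) \mid \cdots \mid H_h(S)]$. Finally, $\cdot W^O$ is equivariant by the $S \mapsto SW$ lemma, and composition of equivariant maps preserves equivariance. For claim 2, I would unfold the definitions directly: $(f+g)(PS) = f(PS) + g(PS) = Pf(S) + Pg(S) = P(f(S) + g(S))$, where the last step uses that left-multiplication by $P$ distributes over matrix addition.

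For claim 3, I would write a residual connection as $x \mapsto x + f(x)$, where $f$ is either the multihead self-attention sublayer or the position-wise feed-forward sublayer; the identity map is trivially equivariant, $f$ is equivariant by claim 1 in the attention case and by the element-wise lemma in the feed-forward case, and then claim 2 with $g = \mathrm{id}$ delivers equivariance of the residual. For claim 4, I would decompose a Transformer encoder block into its standard subcomponents (LayerNorm, multihead self-attention with residual, LayerNorm, position-wise MLP with residual). LayerNorm and the position-wise MLP act on each token independently and are therefore element-wise in the sense of the earlier lemma, hence equivariant; the attention and residual subcomponents are equivariant by claims 1 and 3. The composition lemma then closes the argument for the whole block.

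The main obstacle I anticipate is the bookkeeping in claim 1 around head concatenation: I need to state carefully that stacking the $n \times d_v$ head outputs horizontally into an $n \times (h d_v)$ matrix inherits row-equivariance from the row-equivariance of each head, which is elementary but easy to elide. A secondary subtlety is verifying that LayerNorm genuinely fits the ``element-wise'' hypothesis of the earlier lemma; since LayerNorm's statistics are computed per token (per row), it does act independently on each sequence element and so introduces no cross-token coupling beyond what the attention sublayer already contributes.
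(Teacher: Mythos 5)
Your proof is correct, and it is essentially the intended argument: the paper itself does not prove this lemma (it defers to the cited source, \citet{attn_datapoints}), but the preceding lemmas on element-wise operations, composition, right-multiplication $S \mapsto SW$, and single-head attention are exactly the ingredients you compose, in the standard way, to obtain all four claims. Your handling of the two delicate points---that row permutation commutes with horizontal concatenation of head outputs, and that LayerNorm and the position-wise MLP are per-token and hence ``element-wise'' in the sense of the earlier lemma---is sound.
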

\begin{property}
The Transformer encoder $f_\theta$ is permutation-equivariant.
\end{property}

\subsection{Additional Training Details}
\textbf{Training at Different Support Sizes.} While pre-training on FS-Mol, \method used example sequence sizes of $\{16, 32, 64, 128, 256\}$. Similar to the training of language models or other sequence-based methods that group examples within a batch to have a similar sequence length, we compose all examples within a batch to have the same support size. 

It is also the case that an initial example sequence of size $|s|$ actually encodes $|s|$ different example sequences, each with sequence length $|s|$ by changing the designation of support and query points within that sequence. For instance, we could designate the first example in the sequence as the query and all other points in the sequence as the support. Repeating this process for at every position in the sequence---so that each example in the sequence is designated as the query exactly once---yields $|s|$ example sequences from an initial example sequence. We found this protocol to be imperative to training stability and posit it serves as an implicit form of data augmentation, supplying the model with additional training examples from a leave-one-out style strategy. 

As we choose a batch size of \num{256}, the ``effective'' batch size for each sequence length changes. For example, let $|s| = 16$, then the effective batch size will contain $16*256=$\num{4096} example sequences. Similarly, if $|s| = 256$, then the effective batch size will contain $256*256=$\num{65536} example sequences. We note that this augmentation is only performed during training. Future work may explore also applying it during evaluation in a manner similar to noisy channel models~\citep{brown1993mathematics, koehn2003statistical, min2021noisy} to improve test-time performance. Further, as we train on a single A100 GPU and use the ``base'' variant ViT, GPU memory is not a limiting factor.

Nevertheless, without rebalancing batch sizes to depend on sequence length, our sequence-augmentation protocol may lead to the model overfitting to smaller support sizes. For example, suppose we instead set our batch size to \num{2} and train on a single dataset composed of \num{512} measurements. Then we have \num{32} different \num{16}-size example sequences by allocating each measurement in the dataset to be used by only a single example sequence of size \num{16}. Similarly, we would have \num{2} different \num{256}-size example sequences by allocating each measurement in the dataset as belonging to only one example sequence of size \num{256}.

As the batch size is set to \num{2}, the model would see \num{16} batches composed of \num{16}-size initial example sequences but only a single batch composed of a \num{256}-size initial example sequence. Moreover, the per-batch loss is normalized by the user-set batch size (i.e. \num{2} in this example) and gradient values are clipped by their norm. Accordingly, the model's parameters are updated at a ratio of \num{16}:\num{1} for \num{16}-size example sequences to \num{256}-size example sequences. Training with respect to effectively \num{16}x more batches at the smaller sequence length may bias the model towards smaller sequence sizes and may account for \method losing ground to other meta-learning baselines at large support sizes. 

\textbf{Training Hyperparameters.} Our optimization protocol uses the Adam~\citep{adam} optimizer with learning rate of \num{5e-5} with a linear warm-up schedule of \num{2000} steps. Adam uses the default $\beta_1$ and $\beta_2$ parameters set by Pytorch~\citep{pytorch}. We follow the implementation of \citet{vit} for the Transformer encoder, using the ``base'' variant and setting dropout to \num{0.2} after a very coarse hyperparameter search over \#warmup steps = \{$100, 2000$\} and dropout = \{$0, 0.05, 0.1, 0.2$\}. We use early stopping on the FS-Mol validation split, with a window-size of \num{10} and using the Validation cross entropy loss as our stopping criterion.

\end{document}